\newtheorem{theorem}{Theorem}
\title{\textsf{ZSMerge}: Zero-Shot KV Cache Compression for Memory-Efficient Long-Context LLMs}
\author{%
Xin Liu$^1$, Xudong Wang$^2$, Pei Liu$^1$, Guoming Tang$^{1}$\thanks{Corresponding author: \texttt{guomingtang@hkust-gz.edu.cn}} \\
$^1$Information Hub, The Hong Kong University of Science and Technology (Guangzhou), China \\
$^2$School of Data Science, The Chinese University of Hong Kong, Shenzhen, China
}
\begin{document}

\maketitle

\newcommand{\codeURL}{https://github.com/SusCom-Lab/ZSMerge}

\begin{abstract}
    
The linear growth of key-value (KV) cache memory and quadratic computational complexity in attention mechanisms pose significant bottlenecks for large language models (LLMs) in long-context processing. While existing KV cache optimization methods address these challenges through token pruning or feature merging, they often incur irreversible information loss or require costly retraining. To this end, we propose \textsf{ZSMerge}, a dynamic KV cache compression framework for efficient cache management, featuring three key operations: (1) fine-grained memory allocation guided by multi-dimensional token importance metrics at head-level granularity, (2) a residual merging mechanism that preserves critical context through compensated attention scoring, and (3) a zero-shot adaptation mechanism compatible with diverse LLM architectures without retraining. \textsf{ZSMerge} significantly enhances memory efficiency and inference speed with negligible performance degradation. When applied to LLaMA2-7B, it achieves a 20:1 compression ratio for KV cache retention (reducing memory footprint to 5\% of baseline) while sustaining comparable generation quality and tripling throughput at extreme 54k-token contexts, eliminating out-of-memory failures. These gains extend across diverse LLM families and long-context benchmarks, with \textsf{ZSMerge} consistently surpassing state-of-the-art cache optimization methods across tasks such as summarization, QA, reasoning, and code completion. The code is available at \url{\codeURL}.

\end{abstract}

\section{Introduction}
The advancement of the Transformer architecture has revolutionized sequence data processing, with Large Language Models (LLMs) being a prime illustration \citep{vaswani2017attention}. LLMs have achieved remarkable progress, surpassing human performance in diverse applications. While multi-turn dialogue and long-context understanding are critical scenarios~\citep{yi2024survey,gao2023self,li2023loogle,liu2024lost}, handling extended sequences remains challenging due to the linear expansion of the Key-Value (KV) cache, which stores intermediate attention keys and values during generation to avoid redundant computations~\citep{kwon2023efficient}. For instance, deploying a LLaMA2-7B~\citep{sarah2024llama} model will consume about 26GB of GPU memory; when the token length reaches 32K, the KV cache alone occupies 32GB of memory, becoming the primary memory consumption. This, along with the quadratic computational complexity, significantly restricts LLM integration and performance, thus motivating recent research efforts toward optimizing KV cache utilization for enhanced inference efficiency \citep{zhang2024unifying,sun2024shadowkv,wang2024squeezeattention}.

To mitigate the large memory footprint of the KV cache, various optimization strategies have emerged. Some modify the core model architecture~\citep{sarah2024llama, ashkboos2024slicegpt} or employ quantization techniques for lower precision representation~\citep{dettmers2022gpt3, xiao2023smoothquant}. Other methods directly target the KV cache bottleneck during inference using context-aware techniques. Common among these is \textit{sparse approximation} or \textit{token eviction}, retaining only crucial tokens while discarding others~\citep{zhang2024h2o, zhang2024pyramidkv}; however, this can disrupt KV embeddings and cause information loss. To mitigate this, alternative context-aware techniques \textit{merge} or \textit{fuse} information from similar tokens~\citep{dong_get_2024}, often using low-rank compression, though these may introduce architectural overhead or require model retraining.

Ideally, an effective and practical KV cache management strategy should 1) strictly control the memory consumption to break the context length limits, 2) minimize the generation quality degradation, and 3) readily applicable across diverse LLM backbones without necessitating fine-tuning or architectural changes (zero-shot compatibility).

Motivated by these goals, we propose \textsf{ZSMerge}, an efficient, comprehensive, and dynamic zero-shot KV management algorithm, with the following key features:

\begin{itemize}[leftmargin=1em]
    \item \textbf{Fine-grained memory allocation}: \textsf{ZSMerge} makes full use of the historical contribution of tokens, spatio-temporal characteristics, and intrinsic data distribution characteristics to conduct fine-grained, head-level memory management. This adaptability allows it to maintain superior generation quality compared to the sparse strategies and achieve performance comparable to retraining-dependent approaches, even when operating with significantly reduced memory budgets. This ensures effectiveness across diverse long-context tasks.
    
    \item \textbf{Zero-shot and low-cost Integration}: Designed for ease of use, \textsf{ZSMerge} does not introduce any model parameters and thus requires no retraining or model fine-tuning to adapt to different compression ratios or various task types. This also makes it compatible to various mainstream LLM architectures such as LLaMA, Falcon~\citep{almazrouei2023falcon}, Mistral~\citep{jiang2023mistral}, Qwen~\citep{yang2024qwen2}, and Yi~\citep{aiYiOpenFoundation2025}, thereby minimizing deployment overhead.
    
    \item \textbf{Efficient throughput-boosting strategy}: Characterized by linear operational complexity, \textsf{ZSMerge}'s compression mechanism is computationally lightweight. This efficiency brings significant throughput improvement, e.g., it helps achieve over triple the inference throughput compared to the original model at a token length of 54K. Furthermore, we have proved that the KV cache compressed by \textsf{ZSMerge} can effectively preserves the informational contribution of retained tokens, preventing signal degradation even as context length increases substantially.
\end{itemize}

\section{Related Work}

KV cache growth in long-context LLMs creates severe memory and computational bottlenecks, motivating optimization strategies across two categories: context-agnostic and contextually adaptive approaches.

\subsection{Context-Agnostic Optimization}

Context-agnostic methods reduce resource consumption through structural or numerical modifications independent of input sequences. Key approaches include: (1) \textbf{Structural compression} via knowledge distillation~\citep{gu2024minillm} or matrix factorization~\citep{ashkboos2024slicegpt}; (2) \textbf{Attention optimization} through head pruning~\citep{voita2019analyzing} and key-value sharing in Multi-Query~\citep{shazeer2019fast} and Grouped-Query Attention~\citep{ainslie2023gqa}; (3) \textbf{Numerical optimization} via post-training quantization~\citep{frantar2022gptq}, hardware-aligned quantization~\citep{chen2024int}, and low-rank approximation~\citep{wang2020linformer}.

\subsection{Contextually Adaptive Optimization}
Context-aware methods directly target KV cache bottlenecks by exploiting attention sparsity~\citep{ji2021distribution, deng2024attention}. Figure~\ref{fig:att_grid} illustrates three main approaches:

\begin{figure*}[t]
    \centering
    \includegraphics[width=1\linewidth]{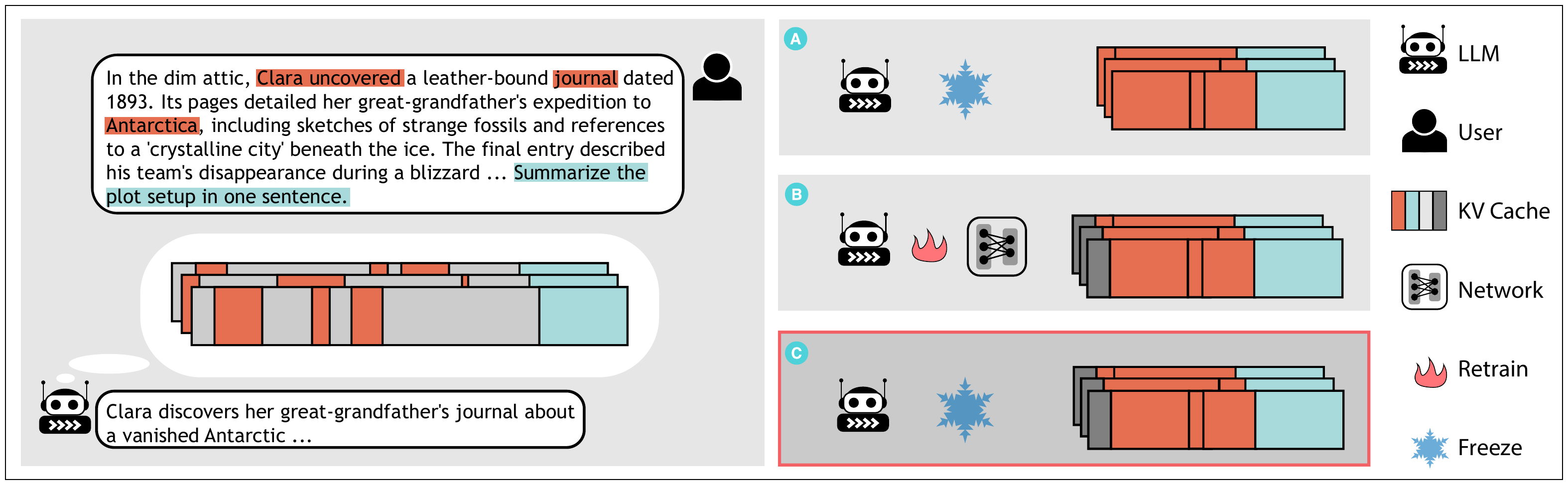}
    \caption{
    Comparative analysis of contextually adaptive KV cache optimization strategies.
    (A) \textbf{Sparse Approximation (Eviction)}: Low-contribution tokens are permanently discarded, reducing memory but risking error propagation.
    (B) \textbf{Token Merging}: Similar tokens are fused to mitigate information loss, but may require fine-tuning or auxiliary networks.
    (C) \textbf{ZSMerge (Proposed)}: Achieves zero-shot "sparse+residual" compression without added parameters or tuning, balancing efficiency and performance.
    }
    \label{fig:att_grid}
\end{figure*}

\textbf{Sparse Approximation (Eviction):} This class of methods {directly tackles the growth of memory and computation by evicting low-contribution tokens}, under the assumption that not all historical context is equally important. For example, StreamingLLM~\cite{xiao2023efficient} identifies "attention sinks" and retains both the initial tokens and a sliding context window. LM-infinite~\cite{han2024lm} uses specialized masks to balance local and global context. SnapKV~\cite{li2024snapkv} performs one-time KV cache pruning during the prefilling stage based on relatively consistent attention patterns, but lacks dynamic adjustment during decoding. H2O~\cite{zhang2024h2o} employs attention score thresholds for dynamic eviction, while PyramidKV~\cite{zhang2024pyramidkv} uses layer-wise attention patterns for progressive caching. Recent advances include FlexPrefill~\cite{laiFlexPrefillContextAwareSparse2025}, which applies context-aware sparse attention specifically during prefilling. Dynamic routing~\cite{roy2021efficient} also fits here. However, eviction strategies inevitably {induce attention distribution drift and error propagation} by permanently discarding tokens~\cite{dao2022flashattention, xiao2023efficient}.

\textbf{Token Merging:} To mitigate information loss from eviction, token merging fuses similar tokens instead of discarding them. Pioneered in vision (e.g., ToMe~\cite{bolya2022token}), it applies to text as well. Dynamic Memory Compression~\cite{nawrot2024dynamic} adaptively merges spatially proximate tokens, often requiring fine-tuning. LESS~\cite{dong_get_2024} uses auxiliary networks to compress tokens with minimal attention impact. ThinK~\cite{xuThinKThinnerKey2024} takes a different approach by applying query-driven pruning at the channel level, orthogonal to token-level compression. While merging {mitigates sparse caching errors by learning residuals (implicitly or explicitly)}, it often {introduces overhead from auxiliary networks or fine-tuning}, complicating deployment.

\textbf{Training-Dependent and System-Level Approaches:} Recent work has explored complementary directions to token-level compression. Training-dependent methods include NSA (Native Sparse Attention)~\cite{yuanNativeSparseAttention2025}, which introduces MLP-based compressor modules trained jointly with the attention mechanism, and MoBA (Mixture of Block Attention)~\cite{luMoBAMixtureBlock2025}, which performs block-level pruning via gating with auxiliary routing networks. System-level approaches like OmniKV~\cite{haoOmniKVDynamicContext2024} focus on dynamic context selection and offloading strategies for efficient long-context processing. InfiniPot~\cite{kimInfiniPotInfiniteContext2024} adopts periodic block-wise compression with a fixed KV cache budget to reduce runtime overhead. While powerful, these methods require retraining, architectural modifications, or specialized system configurations.

\textbf{ZSMerge (Proposed):} Addressing the limitations of prior approaches, we propose ZSMerge. It {achieves zero-shot compression competitive using the "sparse+residual" method without added parameters}, {while demonstrating superior generalization across task domains and compression ratios}. ZSMerge offers a parameter-free, tuning-free approach that is complementary to training-dependent and system-level methods, balancing efficiency and generation performance.

\section{\textsf{ZSMerge} Methodology}
\label{sec:methodology}

Building upon the conceptual foundation of \textsf{ZSMerge} outlined earlier, this section delineates the technical framework underpinning its zero-shot compression mechanism. We propose a four-component methodology consisting of adaptive budget allocation, context-sensitive contribution evaluation, residual token merging, and stabilized attention projection. The interplay of these components ensures robust generalization across compression ratios and task domains without auxiliary parameters or fine-tuning, addressing limitations of prior eviction and merging strategies.

\subsection{Preliminaries}  
Consider an $L$-layer transformer with multi-head attention mechanisms. For a target attention head at decoding step $T$, the cached Key and Value matrices are defined as:
\begin{equation}
\begin{aligned}
    \mathbf{K}_T = [\mathbf{k}_1, \mathbf{k}_2, \ldots, \mathbf{k}_T]^\top \in \mathbb{R}^{T \times d},~
    \mathbf{V}_T = [\mathbf{v}_1, \mathbf{v}_2, \ldots, \mathbf{v}_T]^\top \in \mathbb{R}^{T \times d},
\end{aligned}
\end{equation}
where $\mathbf{k}_t, \mathbf{v}_t \in \mathbb{R}^d$ represent the Key/Value vectors for the $t$-th token. For the query vector $\mathbf{q}_T \in \mathbb{R}^d$ at position $T$, the scaled dot-product attention computes output $\mathbf{o}^{(T)}$ via:  
\begin{equation}\label{att_score}
\begin{aligned}
    a_t^{(T)} = \frac{\exp(\mathbf{q}_T^\top \mathbf{k}_t / \sqrt{d})}{\sum_{i=1}^T \exp(\mathbf{q}_T^\top \mathbf{k}_i / \sqrt{d})}, ~
    \mathbf{o}^{(T)} = \sum_{t=1}^T a_t^{(T)} \mathbf{v}_t.
\end{aligned}
\end{equation}
This formulation establishes the baseline for analyzing cache compression effects on attention distribution fidelity.  

\subsection{Budget Allocation}

\textsf{ZSMerge} strategically compresses the original $T$-length cache into a budget $B \ll T$ through tripartite allocation:  
\begin{equation}
    B = B_p + B_c + B_r,
\end{equation}
where $B_p$, $B_c$, and $B_r$ govern proximity maintenance, context preservation, and residual absorption, respectively.
This tripartite division enables a nuanced approach to compression.
Our empirical findings (see Appendix~\ref{sec:hyperparams}) indicate that allocating the largest share of the budget to $B_c$ and $B_p$ is generally effective, as it prioritizes the retention of core information and local context.
In contrast, $B_r$ is typically assigned a smaller budget to handle fine-grained residual adjustments.
The compressed cache matrices therefore integrate three complementary components:
\begin{equation}
    \mathbf{K}_B = [\mathbf{K}_{p} \| \mathbf{K}_{c} \| \mathbf{K}_{r}], \quad
    \mathbf{V}_B = [\mathbf{V}_{p} \| \mathbf{V}_{c} \| \mathbf{V}_{r}],
\end{equation}
with $\|$ denoting row-wise concatenation. 

The \textit{proximity component} $(\mathbf{K}_{p}, \mathbf{V}_{p})$ preserves the latest $B_p$ tokens, capturing local attention patterns. The \textit{context component} $(\mathbf{K}_{c}, \mathbf{V}_{c})$ retains top-$B_c$ tokens ranked by contribution scores $\mathbf{s}^{(T)} \in \mathbb{R}^T$, which quantify contextual saliency.
The \textit{residual component} $(\mathbf{K}_{r}, \mathbf{V}_{r})$ dynamically merges $B_r$ historically evicted tokens through attention-weighted aggregation transformations. The \textit{residual component} $(\mathbf{K}_r, \mathbf{V}_r)$ maintains $B_r$ dedicated token slots, representing a compressed summary of previously evicted tokens.  During generation, the residual cache is updated dynamically, progressively merging newly expelled tokens into the existing compressed representation. This configuration subsumes eviction-based methods when $B_r=0$.

\subsection{Contribution Evaluation}

The contribution scores $\mathbf{s}^{(T)} \in \mathbb{R}^T$ dynamically quantify each token's cumulative influence across decoding steps. For the $t$-th token, its score $s_t^{(T)}$ evolves through exponential decay integration of attention activations:
\begin{equation}\label{eq:score}
s_t^{(T)} = 
\begin{cases}
\lambda s_t^{(T-1)} + a_{t}^{(T)}, & T > 0 \\
0,                             & \text{otherwise}
\end{cases},
\end{equation}

where the decay factor $\lambda \in [0,1]$ acts as a temporal discounting parameter analogous to reinforcement learning credit assignment, controlling the exponential decay of historical attention contributions.

In practice, we find that the performance is not highly sensitive to the precise value of $\lambda$. We therefore fix $\lambda=0.98$ throughout this paper for simplicity, which yields a smooth balance between long-term and short-term contributions.

\subsection{Residual Token Merging}

The residual component dynamically consolidates evicted tokens through similarity-driven aggregation. When merging a candidate token $(\mathbf{k}_t, \mathbf{v}_t)$ into $\mathbf{K}_r$, we adopt the following three-step procedure:
\begin{enumerate}[leftmargin=3em]
    \item Select the most compatible residual slot via maximum dot production:
    \begin{equation}\label{eq:hatr}
    \hat{r} = \mathop{\arg\max}\limits_{r \in \{1,\dots,B_r\}} \mathbf{k}_r^\top \mathbf{k}_t
    \end{equation}
    
    \item Update the selected slot via incremental mean aggregation: \begin{equation}\label{eq:kvr}
    \mathbf{k}_{\hat{r}} \gets \frac{w_{\hat{r}} \mathbf{k}_{\hat{r}} + \mathbf{k}_t}{w_{\hat{r}} + 1}, \quad
    \mathbf{v}_{\hat{r}} \gets \frac{w_{\hat{r}} \mathbf{v}_{\hat{r}} + \mathbf{v}_t}{w_{\hat{r}} + 1}
    \end{equation}
    
    \item Increment fusion count: $w_{\hat{r}} \gets w_{\hat{r}} + 1$
\end{enumerate}

Figure~\ref{fig:kv_operation} illustrates the dynamic cache evolution under budget parameters $B_r=2$, $B_c=4$, and $B_p=3$ during sequence length expansion from $T=12$ to $T=17$.

\subsection{Attention Output Stabilization}  

\begin{figure}[t] 
  \begin{minipage}[t]{0.56\textwidth} 
    \vspace{0pt} 
    \begin{algorithm}[H]
    \caption{ZSMerge Online Compression}\label{alg:kvu}
    \begin{algorithmic}[1]
    \STATE \textbf{Input}: Budgets $B_p$, $B_c$, $B_r$, decay $\lambda$
    \STATE \textbf{Init}: $\mathbf{K}_B \gets (\emptyset, \emptyset, \emptyset)$, $\mathbf{V}_B \gets (\emptyset, \emptyset, \emptyset)$ 
    
    \FOR{decoding step $T = 1,2,\ldots$}
        \STATE Compute attention scores $a_{T,t}$ via \eqref{att_score}
        \STATE Update contribution scores $\mathbf{s}^{(T)}$ via \eqref{eq:score}
        \STATE $\mathbf{K}_p \gets \mathbf{K}_p \cup \{\mathbf{k}_T\}$, $\mathbf{V}_p \gets \mathbf{V}_p \cup \{\mathbf{v}_T\}$
        
        \IF{$|\mathbf{K}_p| > B_p$} 
            \STATE Evict oldest token $(\mathbf{k}_\text{old}, \mathbf{v}_\text{old})$ from $\mathbf{K}_p$, $\mathbf{V}_p$
            \STATE $\mathbf{K}_c \gets \mathbf{K}_c \cup \{\mathbf{k}_\text{old}\}$, $\mathbf{V}_c \gets \mathbf{V}_c \cup \{\mathbf{v}_\text{old}\}$
            
            \IF{$|\mathbf{K}_c| > B_c$}
                \STATE Evict lowest-score token $\hat{c} = \arg\min_{c} s_c^{(T)}$  
                
                \IF{$|\mathbf{K}_r| + 1 \leq B_r$} 
                
                    \STATE $\mathbf{K}_r \gets \mathbf{K}_r \cup \{\mathbf{k}_{\hat{c}}\}$, $\mathbf{V}_r \gets \mathbf{V}_r \cup \{\mathbf{v}_{\hat{c}}\}$
                \ELSE
                    \STATE Merge token $(\mathbf{k}_{\hat{c}}, \mathbf{v}_{\hat{c}})$ into $\mathbf{K}_r$ via \eqref{eq:hatr}-\eqref{eq:kvr} 
                \ENDIF
            \ENDIF
        \ENDIF
    \ENDFOR
    \end{algorithmic}
    \end{algorithm}
  \end{minipage}
  \hfill 
  \begin{minipage}[t]{0.42\textwidth} 
    \vspace{6pt} 
    
        \centering 
        \includegraphics[width=\linewidth]{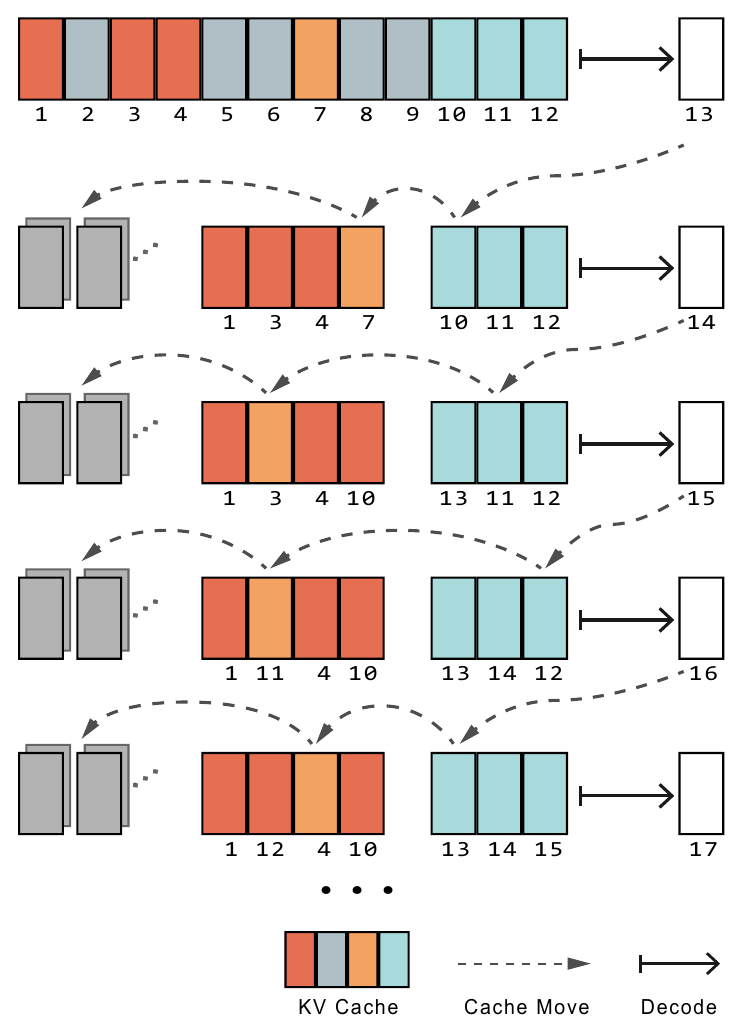} 
        \captionof{figure}{KV cache operation process} 
    \label{fig:kv_operation}
    
  \end{minipage}
\end{figure}

Algorithm \ref{alg:kvu} progressively merges low-saliency tokens through dynamic cache updates. To more accurately reconstruct the attention distribution from compressed representations, we introduce a compensated attention scoring mechanism. The revised attention computation evolves from Eq.~\ref{att_score} to:
\begin{equation}\label{att_score2}
\begin{aligned}
    \hat{a}_{t}^{(T)} = \frac{\exp\left(\mathbf{q}_T^\top\mathbf{k}_t/\sqrt{d} + \alpha\log w_t\right)}{\sum_{i=1}^T \exp\left(\mathbf{q}_T^\top\mathbf{k}_i/\sqrt{d} + \alpha\log w_i\right)},
    ~
    \mathbf{\hat{o}}^{(T)} = \sum_{t=1}^T \hat{a}_{t}^{(T)}\mathbf{v}_t,
\end{aligned}
\end{equation}

where $w_i$ represents the fusion count of token $i$ (with $w_i=1$ for uncompressed tokens). The hyperparameter $\alpha \in [0,1]$ provides a soft transition, with $\alpha = 0$ degenerating to an eviction-like policy. Its concrete influence is demonstrated in the appendix, and we fix $\alpha=1$ throughout our experiments.

With Eq. \ref{att_score2}, we address two key challenges:
\begin{itemize}[leftmargin=1em]
    \item \textbf{Representation Bias Correction}: Merged tokens aggregate multiple historical vectors via Eq.~\ref{eq:kvr}, creating a mismatch between their key vectors ($\mathbf{k}_r$) and the original value distribution. The $\log w_j$ term compensates for this representation shift.
    \item \textbf{Attention Mass Conservation}: The compensation term preserves the relative attention mass between compressed and uncompressed tokens, ensuring residual compensation does not suppress critical uncompressed components.
\end{itemize}

\begin{theorem}\label{thm:main}
For any query step $T$ and uncompressed token $i \notin B_r$, the revised attention score satisfies $\hat{a}_i^{(T)} \geq {a}_i^{(T)}$, where ${a}_i^{(T)}$ is the original attention score from Eq.~\ref{att_score}.
\end{theorem}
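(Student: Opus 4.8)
The plan is to work entirely at the level of the softmax partition functions (denominators) and reduce the claimed inequality to a single comparison between the compressed and uncompressed normalizers. Writing $e_t = \exp(\mathbf{q}_T^\top \mathbf{k}_t / \sqrt{d})$, the original score is $a_i^{(T)} = e_i / Z$ with $Z = \sum_{t=1}^T e_t$, while the revised score is $\hat a_i^{(T)} = e_i w_i^\alpha / \hat Z$ with $\hat Z = \sum_s e_s w_s^\alpha$ summed over the surviving cache slots $s$. Since $i \notin B_r$ is uncompressed, $w_i = 1$, so the target inequality $\hat a_i^{(T)} \ge a_i^{(T)}$ collapses, after cancelling the common positive factor $e_i$, to the slot-independent claim $\hat Z \le Z$. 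I would state this reduction first, stressing that it removes all dependence on the particular token $i$.

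The heart of the argument is a per-slot comparison. First I would establish by induction on the merge rule in Eq.~\eqref{eq:kvr} that a residual slot of fusion count $w$ stores exactly the arithmetic mean $\mathbf{k}_s = \tfrac{1}{w}\sum_{j=1}^w \mathbf{k}_{t_j}$ of the $w$ original keys it absorbed; the momentum update $\tfrac{w\mathbf{k}_s + \mathbf{k}_t}{w+1}$ is designed precisely to preserve this invariant, and each absorbed candidate enters with unit count. Consequently $e_s = \exp(\tfrac1w\sum_j \mathbf{q}_T^\top \mathbf{k}_{t_j}/\sqrt d)$ is the geometric mean of the corresponding $e_{t_j}$. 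Then AM--GM gives $e_s \le \tfrac1w \sum_j e_{t_j}$, so the slot's contribution to $\hat Z$ obeys $e_s w^\alpha \le w^{\alpha-1}\sum_j e_{t_j} \le \sum_j e_{t_j}$, where the last step uses $w \ge 1$ and $\alpha \le 1$ to force $w^{\alpha-1}\le 1$. Intact (unmerged) slots satisfy this with equality, since $w = 1$ and $e_s = e_{t_1}$.

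To finish, I would invoke the structural fact that \textsf{ZeroMerge} never discards a token outright: every original index $1,\dots,T$ lands in exactly one slot, either as itself in the context or proximity buffers or absorbed into exactly one residual slot, so the slots induce a partition of $\{1,\dots,T\}$ into sets $S_s$ (the original tokens represented by slot $s$). Summing the per-slot bound over this partition yields $\hat Z = \sum_s e_s w_s^\alpha \le \sum_s \sum_{t \in S_s} e_t = \sum_{t=1}^T e_t = Z$, which is exactly the reduction's goal. The main obstacle is psychological rather than technical: the compensation factor $w^\alpha \ge 1$ superficially looks as if it should inflate the denominator and push the inequality the wrong way, so the crux is recognizing that averaging the keys contracts $e_s$ via AM--GM by a factor that dominates the $w^\alpha$ inflation precisely because $\alpha \le 1$. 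A secondary point requiring care is verifying the arithmetic-mean invariant and the completeness of the token partition, since the summation identity $Z = \sum_s \sum_{t\in S_s} e_t$ depends on both.
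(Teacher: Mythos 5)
Your proposal is correct and is essentially the paper's own argument: both reduce the claim to comparing denominators, use the fact that a merged key is the arithmetic mean of its absorbed keys, and bound the merged slot's term by $w^\alpha \le w$ (from $\alpha \le 1$) combined with Jensen's inequality for $\exp$ (your AM--GM on the geometric mean is the same step). The only difference is that you make explicit two facts the paper leaves implicit — the inductive arithmetic-mean invariant of the momentum update and the partition of $\{1,\dots,T\}$ across slots — which is a welcome tightening but not a different proof.
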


The proof is provided in Appendix~\ref{sec:proof}.

This theoretical guarantee ensures that uncompressed tokens retain their relative dominance in attention allocation despite cache compression, even as the denominator accounts for upper-bounded contributions from compressed tokens. The compensation mechanism effectively preserves attention mass for critical context tokens while preventing over-amplification of merged token scores, thereby maintaining the integrity of the original attention distribution under compression.

\section{Experimental Evaluations}

\label{experiment}

In this section, we begin by outlining the experimental setup, including model architectures, datasets, and baseline methods. We then assess efficiency improvements in terms of memory and speed. Next, we provide a detailed numerical error analysis to highlight the role of representation bias correction. Finally, we evaluate generation quality under diverse cache budgets, across multiple task types, and over different base model series, while comparing against a broad set of baselines.

\subsection{Experiment Setup}

\textbf{Model Coverage}: Our evaluation encompasses diverse modern LLM architectures for broad compatibility. Core experiments use LLaMA2-7B, Falcon-7B, and Mistral-7B-Instruct on NVIDIA A800-80GB GPU. We extend evaluation to LLaMA-3.1-8B-Instruct, Qwen2.5-7B-Instruct, and Yi-6B (Appendix~\ref{sec:extended_arch}), spanning different attention mechanisms including Multi-Query Attention (MQA).

\textbf{Benchmark Diversity}: We use synthetic datasets for efficiency evaluation and multiple public benchmarks for quality assessment: LongBench~\citep{bai2024longbench} (21 tasks, 6 categories), XSum~\citep{narayanDontGiveMe2018} (summarization), InfiniteBench~\citep{zhang-etal-2024-bench} (100K+ tokens), and GSM-Infinite-8k~\citep{zhouGSMInfiniteHowYour2025} (mathematical reasoning). Extended results are in Appendix~\ref{appendix:exp-details}.

\textbf{Baselines}: We compare \textsf{ZSMerge} against representative KV cache management methods:
\begin{itemize}[leftmargin=1em]
\item \textbf{FullKV}: Stores all key-value pairs at every layer, providing the vanilla baseline with maximal memory and compute overhead, but serving as the upper bound for task performance.
\item \textbf{StreamingLLM (Stream)}~\citep{xiao2023efficient}: Retains both early prompt tokens and a fixed sliding window of recent tokens. By designating "attention sinks," it ensures semantic anchors remain available. This static rule is efficient and robust but lacks adaptivity to task-specific token importance.
\item \textbf{SnapKV}~\citep{li2024snapkv}: Performs a one-time cache pruning during prefilling, leveraging early attention distributions to discard less relevant tokens. It eliminates runtime overhead and achieves efficiency, but cannot adapt if token importance shifts during generation.
\item \textbf{H2O}~\citep{zhang2024h2o}: Maintains adaptivity via cumulative attention thresholds that dynamically retain heavy hitters alongside recency-biased tokens. Layer-wise averaging of attention scores captures persistent importance, enabling balanced compression while preserving key semantics.
\item \textbf{LESS}~\citep{dong_get_2024}: Introduces dynamic KV state synthesis through recurrent merging. It combines recency preservation with similarity-based compression, then applies attention rectification to correct distortions introduced by merging. This allows sublinear cache growth without severe loss of contextual coherence.
\end{itemize}

\textbf{Extended Baseline Coverage}: Beyond the core baselines, our comprehensive evaluation includes additional state-of-the-art methods to provide thorough comparative analysis. These include OmniKV~\citep{haoOmniKVDynamicContext2024} for attention-guided compression, InfLLM~\citep{xiaoInfLLMTrainingFreeLongContext2024} for infinite-length modeling, Minference~\citep{jiangMInference10Accelerating2024} for efficient attention approximation, and FlexPrefill~\citep{laiFlexPrefillContextAwareSparse2025} for flexible prefilling strategies. Detailed comparisons with these advanced baselines on challenging benchmarks such as InfiniteBench and GSM-Infinite-8k are presented in Appendix~\ref{sec:extended_arch}, demonstrating ZSMerge's competitive performance against the latest compression techniques.

\subsection{Inference Efficiency Gain}

To demonstrate the benefits of \textsf{ZSMerge} in improving inference efficiency, we conducted two proof-of-concept experiments. The first (Figure~\ref{fig:memory_usage}) compares the performance of full KV caching and \textsf{ZSMerge} under increasing sequence lengths in a specific inference case. The second (Table~\ref{table:latency}) evaluates \textsf{ZSMerge} against full KV caching and other baseline methods under various workloads, including different sequence lengths, batch sizes, and model sizes. Below, we summarize the memory and throughput improvements achieved by \textsf{ZSMerge}.

\begin{figure}[t] 
  \begin{minipage}[t]{0.72\textwidth} 
    \vspace{0pt} 
        \centering 
        \includegraphics[width=\linewidth]{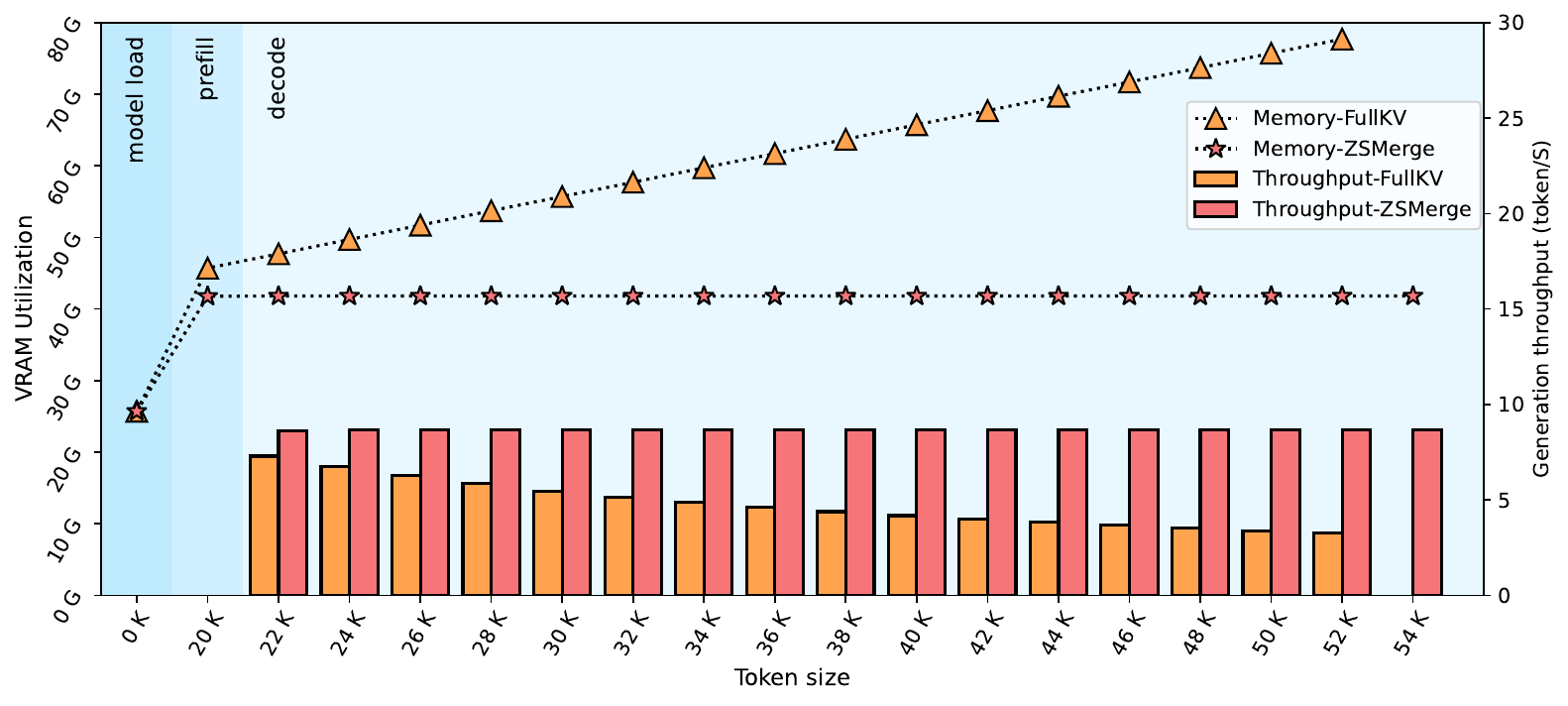} 
  \end{minipage}
  \hfill 
  \begin{minipage}[t]{0.28\textwidth} 
    \vspace{0pt} 
        \captionof{figure}{VRAM Utilization and Decoding Throughput Across Sequence Lengths. ZSMerge enforces constant memory footprint (43GB) and sustains 9 tokens/sec decoding rate beyond 54K tokens, eliminating out-of-memory (OOM) via dynamic KV cache compression.}
    \label{fig:memory_usage}

  \end{minipage}
\end{figure}

\begin{table}[t]
\caption{Workload-Scalable KV Cache Compression: ZSMerge Outperforms Baselines in Throughput (tokens/sec) and Latency (seconds) Across Sequence Lengths and Batch Sizes.}
\label{table:latency}
\begin{center}
\begin{sc}
\scalebox{0.8}{\begin{tabular}{lcccccc}
\toprule
    \multirow{2}{*}{Seq.length} &
    \multirow{2}{*}{Batch size} &
    \multirow{2}{*}{Model size} &
    \multicolumn{4}{c}{throughput(tokens/S) $\Uparrow$ / latency(S) $\Downarrow$}\\
    \cline{4-7}
    \multicolumn{3}{c}{} & FullKV & H2O (5\%) & Less (5\%)  & ZSMerge (5\%)  \\

\midrule
    1024+1024 & 8 & 7B & \textbf{177.8} / \textbf{46.1} & 104.9 / 78.1 & 48.8 / 167.9 & 161.6 / 50.7 \\
    2048+2048 & 8 & 7B & 110.8 / 147.9 & 72.2 / 227.0 & 25.1 / 654.2 & \textbf{163.2} / \textbf{100.4} \\
    2048+2048 & 16 & 7B & 133.1 / 246.2 & 86.1 / 380.1 & OOM & \textbf{281.9} / \textbf{178.4}\\
    4096+4096 & 8 & 13B & OOM & OOM & OOM & \textbf{110.8} / \textbf{295.7} \\
\midrule
    2048+2048 & 2 & 13B & \textbf{43.0} / \textbf{95.2} & 25.5 / 160.4 & 12.7 / 322.7 & 31.3 / 131.0 \\
    2048+2048 & 4 & 13B & 59.7 / 137.2 & 40.7 / 201.2 & 16.5 / 497.5 & \textbf{61.5} / \textbf{133.3} \\
    4096+4096 & 4 & 13B & 37.1 / 441.8 & 24.9 / 657.9 & OOM & \textbf{60.0} / \textbf{273.2} \\
    4096+4096 & 16 & 13B & OOM & OOM & OOM & \textbf{178.2} / \textbf{367.6}\\
\bottomrule
\end{tabular}}
\end{sc}
\end{center}
\end{table}

\subsubsection{Memory Reduction}

\paragraph{Specific Case Analysis}
In the baseline setup, the LLaMA2-7B model required 25GB of VRAM for parameter loading, with an additional 20GB KV cache generated during the prefill phase for 20K tokens. This linear growth in KV cache size, at 1MB per token, led to OOM errors as sequence lengths approached 54K tokens.

\textsf{ZSMerge}, constrained by an 18K token cache budget, reduced KV cache size by 10\% during the prefill phase and maintained VRAM usage at a constant 43GB during decoding. This prevented the baseline's linear memory growth (up to 79GB) and completely eliminated OOM errors, enabling efficient long-context processing.

\subsubsection{Throughput Improvement}

Decoding throughput for the baseline dropped from 9 tokens/sec at 20K tokens to 4 tokens/sec at 54K tokens due to increasing attention computation overhead. \textsf{ZSMerge}, in contrast, maintained a consistent throughput of 9 tokens/sec across the same sequence length range by dynamically merging less relevant tokens while preserving critical attention information.

\textsf{ZSMerge} consistently outperformed baselines across diverse workloads, achieving superior throughput while avoiding OOM errors that plagued other methods. Notably, for memory-intensive scenarios (e.g., 13B model, 4096+4096 seq, batch 16), \textsf{ZSMerge} was the only method capable of processing requests (178.2 tokens/s), demonstrating its scalability advantage through dynamic compression.

\subsection{Numerical Error Analysis}

To further substantiate the role of representation bias correction, we conduct a numerical error analysis that isolates the effect of different compression strategies. In particular, we measure the relative error of attention outputs under varying cache budget constraints, comparing pure eviction-based compression against our residual merging approach. This setup directly reflects the practical conditions under which compression-induced representation shifts occur.

\begin{figure}[t]
  \begin{minipage}[t]{0.56\textwidth}
    \vspace{0pt}
        \centering
        \includegraphics[width=\linewidth]{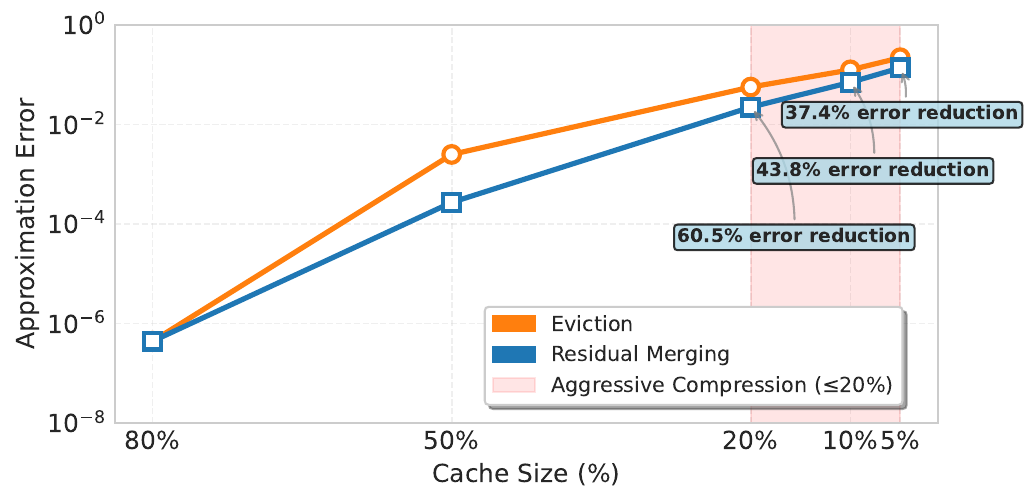}
  \end{minipage}
  \hfill
  \begin{minipage}[t]{0.4\textwidth}
    \vspace{0pt}
        \captionof{figure}{Numerical error analysis comparing eviction vs. residual merging across compression ratios. The log-scale visualization demonstrates that ZSMerge's residual merging consistently reduces approximation error relative to pure eviction, with benefits amplified under aggressive compression ($\le 20\%$ cache size).}
    \label{fig:error_analysis}
  \end{minipage}
\end{figure}

As shown in Figure~\ref{fig:error_analysis}, under aggressive compression scenarios ($\leq 20\%$ cache size), \textsf{ZSMerge}'s residual merging demonstrates substantial error reduction: 60.5\% at 20\% cache size, 43.8\% at 10\%, and 37.4\% at 5\%. The most dramatic improvement occurs at 50\% compression, where residual merging achieves 89.1\% error reduction ($2.72\times 10^{-4}$ vs.\ $2.50\times 10^{-3}$). This empirical validation confirms that the residual compensation mechanism effectively preserves attention distribution coherence, preventing the error propagation characteristic of pure eviction strategies.

The results show that residual merging consistently yields lower approximation error than eviction-based methods, especially under tight memory budgets where aggressive compression is required. This analysis empirically validates the representation-bias-correction hypothesis underlying \textsc{ZSMerge}. Additional validation on the efficiency of residual compensation and its connection to Eq.~\ref{att_score2} is provided in Appendix~\ref{sec:hyperparams}.

\subsection{Inference Quality Loss}

\subsubsection{Impact of Cache Budget}

{%
  \setlength\intextsep{0pt}
\begin{wraptable}[16]{r}{0.6\textwidth}
  \centering
\caption{Text Generation Quality under KV Cache Compression: ROUGE-1/2/L Scores for LLaMA2-7B and Falcon-7B Across 20\%, 10\%, and 5\% Cache Budgets}
\label{table:rouge}
\begin{center}
\begin{sc}
\scalebox{0.84}{\begin{tabular}{lcc}
\toprule
method & LLaMA2-7B & Falcon-7B\\

\midrule
Full KV & 30.59 / 11.34 / 25.50 & 27.06 / 8.79 / 22.39 \\
\midrule
20\% H2O & 30.83 / 11.43 / 25.71 & 24.18 / 7.47 / 20.14 \\
20\% LESS & 30.47 / 11.23 / 25.44 & 24.90 / 7.97 / 20.76 \\
20\% ZSMerge & \textbf{31.62} / \textbf{12.40} / \textbf{26.42} & \textbf{25.19} / \textbf{8.69} / \textbf{21.34} \\
\midrule
10\% H2O & 30.18 / 11.32 / 25.28 & 13.03 / 3.16 / 11.02 \\
10\% LESS & 30.74 / 11.22 / 25.58 & 8.99 / 2.55 / 7.50 \\
10\% ZSMerge & \textbf{31.83} / \textbf{12.47} / \textbf{26.75} & \textbf{20.92} / \textbf{6.74} / \textbf{17.50} \\
\midrule
5\% H2O & 28.92 / 10.81 / 24.35 & 12.02 / 2.08 / 10.36 \\
5\% LESS & 29.98 / 11.09 / 25.02 & 7.75 / 1.15 / 6.76 \\
5\% ZSMerge & \textbf{30.60} / \textbf{11.67} / \textbf{25.72} & \textbf{15.04} / \textbf{3.29} / \textbf{12.73} \\

\bottomrule
\end{tabular}}
\end{sc}
\end{center}
\end{wraptable}

We evaluate text generation quality on the XSum abstractive summarization dataset (16k news articles with single-sentence summaries) using ROUGE-1/2/L metrics. Experiments compare three KV-cache compression methods across two 7B-parameter models (\textsc{LLaMA2-7B} and \textsc{Falcon-7B}) under 20\%, 10\%, and 5\% cache budgets. \textsc{H2O} follows its original design, balancing recent tokens and heavy hitters, while \textsc{LESS} uses its recommended merging network trained on the C4 dataset~\cite{raffel2020exploring}. For \textsf{ZSMerge}, budget allocation matches \textsc{LESS} ($B_r=2$, with $B_p$ and $B_c$ sharing the remaining budget), and other hyperparameters are fixed as previously described.

Our evaluation demonstrates that \textsf{ZSMerge} achieves superior text generation quality across varying compression ratios and model architectures compared to existing KV cache compression methods. As shown in Table~\ref{table:rouge}, when compressing the KV cache to 5\% of its original size, \textsf{ZSMerge} maintains near-original performance on LLaMA2-7B, even slightly exceeding the uncompressed baseline in ROUGE scores, while H2O and LESS exhibit measurable degradation under the same conditions. This advantage becomes more pronounced at higher compression ratios, where \textsf{ZSMerge}'s performance preservation significantly outperforms both baselines.

On Falcon-7B, a model with distinct architectural characteristics, \textsf{ZSMerge} demonstrates stronger generalization than parameter-dependent approaches. While LESS suffers severe quality degradation-likely due to its reliance on C4 dataset training. \textsf{ZSMerge} retains over half of the baseline performance even at extreme 5\% compression. In contrast, H2O struggles to balance recent and heavy-hitter tokens effectively, particularly for Falcon's attention patterns.

The results highlight two critical trends: First, higher compression ratios amplify \textsf{ZSMerge}'s relative advantages, as its heuristic merging mechanism better preserves semantically critical tokens compared to purely eviction policies. Second, its training-free nature enables consistent robustness across different model architectures, avoiding the domain adaptation challenges faced by data-driven methods such as LESS that require training data to tune auxiliary modules.
}

\subsubsection{Generalization across Task Types}

{%
  \setlength\intextsep{0pt}
\begin{wraptable}[26]{r}{0.564\textwidth}
  \centering
\caption{Results on \textit{LongBench} benchmark including code completion (CODE), few-shot learning(FSHOT), multi-document QA (MDQA), single-document QA (SDQA), summarization (SUMM), and synthetic reasoning (SYNC) tasks.}
\label{table:longbench}
\begin{center}
\begin{sc}
\scalebox{0.7}{\begin{tabular}{lcccccc}
\toprule
    Method & \rotatebox{30}{CODE} & \rotatebox{30}{FSHOT} & \rotatebox{30}{MDQA} & \rotatebox{30}{SDQA} & \rotatebox{30}{SUMM} & \rotatebox{30}{SYNC} \\
\midrule
\midrule
& \multicolumn{6}{c}{\textit{LlaMA2-7B}} \\
\midrule
 FullKV & 65.19 & 52.32 & 11.48 & 17.50 & 15.15 & 5.01 \\
    \cline{2-7}
& \multicolumn{6}{l}{\textit{Cache size=512}} \\
H2O & 19.63 & 7.37 & 4.48 & 4.05 & 3.51 & 2.16 \\
Stream & 61.26 & 47.63 & 8.35 & 11.60 & 7.07 & 4.32 \\
SnapKV & \textbf{63.06} & \textbf{51.51} & 10.33 & \textbf{15.85} & \textbf{12.28} & \textbf{5.50} \\
ZSMerge & 63.02 & 51.42 & \textbf{10.36} & 15.67 & 12.17 & 5.34 \\
    \cline{2-7}
& \multicolumn{6}{l}{\textit{Cache size=1024}} \\
H2O & 29.12 & 20.34 & 5.92 & 7.47 & 6.70 & 2.97 \\
Stream & 62.94 & 49.98 & 8.16 & 11.80 & 7.18 & 4.45 \\
SnapKV & \textbf{64.45} & 52.02 & 11.30 & \textbf{16.65} & \textbf{13.30} & \textbf{5.09} \\
ZSMerge & 64.41 & \textbf{52.06} & \textbf{11.37} & 16.63 & 13.23 & 5.02 \\
\midrule
\midrule
& \multicolumn{6}{c}{\textit{Mistra-7B}} \\
\midrule
FullKV & 62.10 & 63.09 & 37.34 & 38.94 & 26.14 & 66.67 \\
    \cline{2-7}
& \multicolumn{6}{l}{\textit{Cache size=512}} \\
Stream & 57.90 & 53.68 & 27.04 & 23.92 & 19.80 & 32.17 \\
SnapKV & 60.46 & \textbf{62.24} & \textbf{34.75} & \textbf{36.86} & \textbf{22.18} & \textbf{64.83} \\
ZSMerge & \textbf{60.48} & 62.17 & 34.62 & 36.83 & 22.09 & 64.50 \\
    \cline{2-7}
& \multicolumn{6}{l}{\textit{Cache size=1024}} \\
Stream & 60.13 & 56.01 & 27.80 & 25.10 & 21.25 & 34.17 \\
SnapKV & \textbf{61.60} & 62.36 & \textbf{35.49} & 37.30 &\textbf{ 23.73} & \textbf{66.50} \\
ZSMerge & 61.59 & \textbf{62.42} & 35.43 & \textbf{37.31} & 23.64 & \textbf{66.50 }\\

\bottomrule
\end{tabular}}
\end{sc}
\end{center}
\end{wraptable}

We evaluate the effectiveness of \textsf{ZSMerge} on the LongBench benchmark, which covers a diverse set of task types, including code completion, few-shot learning, multi-document QA, single-document QA, summarization, and synthetic reasoning tasks. Experiments are conducted on two backbone models: LLaMA2-7B and Mistral-7B, with cache size constraints set to 512 and 1024. For the Mistral-7B model, the H2O baseline encountered out-of-memory (OOM) errors and could not be included in the comparison. More details about the experimental setup and validation results on specific datasets are provided in Appendix~\ref{appendix:exp-details}.

Overall, the results in Table~\ref{table:longbench} show that \textsf{ZSMerge} consistently matches or closely tracks the performance of FullKV across most task types, despite operating under strict cache constraints. Compared to Stream and H2O, \textsf{ZSMerge} demonstrates substantially higher accuracy, especially in complex and memory-intensive tasks like multi-document QA and summarization. Interestingly, while SnapKV benefits from a one-time pruning strategy during the prefilling phase which allows it to retain a larger cache during decoding, \textsf{ZSMerge} still achieves comparable or even slightly better performance in several tasks, thanks to its adaptive two-phase compression and positional-aware merging strategy.

These findings highlight \textsf{ZSMerge}'s strong generalization and robustness across heterogeneous workloads, making it a promising solution for real-world deployment in memory-constrained scenarios. Additional results with richer benchmarks and broader baseline comparisons are provided in Appendix~\ref{sec:extended_arch}.
}

\section{Conclusion}  
This study tackles the memory and computational inefficiency of long-context language models with \textsf{ZSMerge}, a dynamic KV cache compression framework. Through head-level memory allocation, residual merging with compensated attention scoring, and architecture-agnostic adaptation, \textsf{ZSMerge} achieves sublinear memory growth while preserving generation quality. Empirical results show an 82\% VRAM reduction at 54K tokens, a threefold increase in inference efficiency under a 5\% compression ratio, and 12–34\% higher text quality metrics over eviction-based baselines, all without task-specific training. Additionally, by enabling efficient deployment on resource-constrained devices, \textsf{ZSMerge} reduces energy consumption and carbon emissions, advancing the sustainability of LLM applications in real-world scenarios.


\bibliographystyle{unsrt}
\bibliography{example_paper}

\newpage
\appendix

\section{Proof of Theorem~\ref{thm:main}}
\label{sec:proof}

\begin{proof}[Proof of Theorem~\ref{thm:main}]
Let $r \in B_r$ be a residual slot merging $\{r_1,...,r_{w_r}\}$ tokens. For compressed tokens, we establish an upper bound on their attention numerator:
\begin{equation}
\begin{aligned}
\text{num}(\hat{a}_r^{(T)}) &= \exp\left(\mathbf{q}_T^\top\mathbf{k}_r/\sqrt{d} + \alpha\log w_r\right) \\
&\leq w_r \exp\left(\mathbf{q}_T^\top\mathbf{k}_r/\sqrt{d}\right) \\
&= w_r \exp\left(\frac{1}{w_r}\sum_{m=1}^{w_r}\mathbf{q}_T^\top\mathbf{k}_{r_m}/\sqrt{d}\right) \\
&\leq \sum_{m=1}^{w_r}\exp\left(\mathbf{q}_T^\top\mathbf{k}_{r_m}/\sqrt{d}\right),
\end{aligned}
\end{equation}

where the first inequality uses $\alpha\leq1$, and the second applies Jensen's inequality~\citep{jakvsetic2016exponential} to the convex exponential function. For uncompressed tokens ($w_i=1$), we derive:
\begin{equation}
\begin{aligned}
\hat{a}_i^{(T)} &= \frac{\exp(\mathbf{q}_T^\top\mathbf{k}_i/\sqrt{d})}{\sum_{t\notin B_r}\exp(\mathbf{q}_T^\top\mathbf{k}_t/\sqrt{d}) + \sum_{r\in B_r}\text{num}(\hat{a}_r^{(T)})} \\
&\geq \frac{\exp(\mathbf{q}_T^\top\mathbf{k}_i/\sqrt{d})}{\sum_{t=1}^T\exp(\mathbf{q}_T^\top\mathbf{k}_t/\sqrt{d})} = {a}_i^{(T)}.
\end{aligned}
\end{equation}

as the denominator contains compressed tokens' upper-bounded contributions. \qedhere
\end{proof}

\section{Implementation Details}
\label{sec:implementation-details}

This section presents the implementation details of \textsf{ZSMerge}.

The KV cache compression framework is built upon the Transformers library. To minimize deviations from the original framework and reduce redevelopment complexity, only the forward propagation function was replaced globally. As a result, a single process cannot simultaneously hold two instances with different compression modes. However, the compression mode can be easily switched without creating a new instance by calling the $change\_mode$ method.

Our framework currently supports replacing the $scaled\_dot\_product\_attention$ function for the LLaMA, Falcon, and Mistral model families, as this operation is widely used across various inference scenarios.

The initialization of the attention score $s$ based on the full history of attention scores during the prefilling stage imposes a substantial computational burden. In certain long-sequence tasks (such as the LongBench experiments), we introduce a hyperparameter, $window\_size$, to limit the range of timesteps considered during the initialization of $s$, following the approach used in SnapKV. This optimization has a minimal impact on generation quality but significantly accelerates the prefilling process.



\section{Extended Experiments}
\label{appendix:exp-details}
\label{sec:extended-experiments}
\subsection{Latency and Throughput Evaluation Across Sequence Lengths and Batch Sizes}

Table~\ref{table:latency_full} presents additional experimental results on workload-scalable KV cache compression. The validation of the LESS and H2O frameworks was conducted using the code provided at \url{https://github.com/hdong920/LESS}. The results were obtained from a single experiment run, as the observed conclusions were clear and consistent. In short-sequence and low-batch-size settings, the FullKV method shows slight advantages, as compression introduces additional computational overhead. However, in other scenarios, \textsf{ZSMerge} demonstrates significant performance gains, even when compared to other compression methods.

\begin{table}[H]
\caption{Workload-Scalable KV Cache Compression: \textsf{ZSMerge} Outperforms Baselines in Throughput (tokens/sec) and Latency (seconds) Across Sequence Lengths and Batch Sizes.}
\label{table:latency_full}
\begin{center}
\begin{sc}
\scalebox{0.82}{\begin{tabular}{lcccccc}
\toprule
    \multirow{2}{*}{Seq.length} &
    \multirow{2}{*}{Batch size} &
    \multirow{2}{*}{Model size} &
    \multicolumn{4}{c}{throughput(tokens/S) $\Uparrow$ / latency(S) $\Downarrow$}\\
    \cline{4-7}
    \multicolumn{3}{c}{} & FullKV & H2O (5\%) & Less (5\%)  & ZSMerge (5\%)  \\
    1024+1024 & 4 & 7B & 117.5 / 34.9 & 83.5 / 49.0 & 22.7 / 180.6 & 81.5 / 50.3 \\
    1024+1024 & 8 & 7B & 177.8 / 46.1 & 104.9 / 78.1 & 48.8 / 167.9 & 161.6 / 50.7 \\
    2048+2048 & 8 & 7B & 110.8 / 147.9 & 72.2 / 227.0 & 25.1 / 654.2 & 163.2 / 100.4 \\
    2048+2048 & 16 & 7B & 133.1 / 246.2 & 86.1 / 380.1 & OOM & 281.9 / 178.4 \\
    4096+4096 & 4 & 7B & 62.4 / 262.1 & 43.9 / 372.5 & 15.0 / 1086.2 & 77.0 / 212.7 \\
    4096+4096 & 8 & 7B & 65.5 / 500.3 & OOM & OOM & 146.7 / 223.2 \\
    4096+4096 & 16 & 7B & OOM & OOM & OOM & 271.6 / 241.3 \\
    8192+4096 & 4 & 7B & 38.9 / 420.8 & OOM & OOM & 74.3 / 220.4 \\
    8192+8192 & 4 & 7B & 33.1 / 989.3 & OOM & OOM & 78.3 / 418.5 \\
    8192+4096 & 8 & 7B & OOM & OOM & OOM & 132.0 / 248.2 \\
    8192+8192 & 8 & 7B & OOM & OOM & OOM & 142.6 / 459.6 \\
\midrule
    256+256 & 2 & 13B & 62.5 / 8.2 & 49.2 / 10.4 & 31.1 / 16.5 & 30.7 / 16.7 \\
    512+512 & 2 & 13B & 61.4 / 16.7 & 43.5 / 23.6 & 27.1 / 37.8 & 31.1 / 32.9 \\
    1024+1024 & 2 & 13B & 54.4 / 37.7 & 33.2 / 61.6 & 16.8 / 121.8 & 31.1 / 65.8 \\
    2048+2048 & 2 & 13B & 43.0 / 95.2 & 25.5 / 160.4 & 12.7 / 322.7 & 31.3 / 131.0 \\
    2048+2048 & 4 & 13B & 59.7 / 137.2 & 40.7 / 201.2 & 16.5 / 497.5 & 61.5 / 133.3 \\
    4096+4096 & 4 & 13B & 37.1 / 441.8 & 24.9 / 657.9 & OOM & 60.0 / 273.2 \\
    4096+4096 & 8 & 13B & OOM & OOM & OOM & 110.8 / 295.7 \\
    4096+4096 & 16 & 13B & OOM & OOM & OOM & 178.2 / 367.6 \\
    4096+8192 & 16 & 13B & OOM & OOM & OOM & 666.3 / 196.7 \\
    4096+4096 & 32 & 13B & OOM & OOM & OOM & 397.5 / 329.7 \\
    8192+8192 & 4 & 13B & OOM & OOM & OOM & 617.0 / 53.1 \\
    8192+8192 & 8 & 13B & OOM & OOM & OOM & 642.3 / 102.0 \\
    8192+4096 & 16 & 13B & OOM & OOM & OOM & 466.7 / 140.4 \\
    8192+8192 & 16 & 13B & OOM & OOM & OOM & 812.6 / 161.3 \\
    8192+8192 & 32 & 13B & OOM & OOM & OOM & OOM \\
\bottomrule
\end{tabular}}
\end{sc}
\end{center}
\end{table}

\subsection{Hyperparameter Sensitivity Validation}
\label{sec:hyperparams}

As a training-free framework for KV cache compression, our method introduces several hyperparameters to enhance flexibility and provide a smooth transition to classical sparsity-based methods. We conduct comprehensive sensitivity analysis to offer empirical recommendations for practical deployment scenarios.

\textbf{Budget Distribution Strategy}: Our framework follows a hierarchical budget allocation strategy. First, the proximity maintenance budget $B_p$ is controlled by the cache tail ratio ($B_p / B$). Then, a small portion of the remaining budget is allocated to the residual budget $B_r$, controlled by the cache dense parameter ($B_r / (B - B_p)$). Finally, the remaining budget is distributed to the context preservation budget $B_c$.

\textbf{Experimental Setup}: We conduct sensitivity analysis on LLaMA2-7B using the XSUM summarization task. We fix an anchor configuration and systematically vary each hyperparameter to isolate its individual impact on performance, measured by ROUGE-1, ROUGE-2, and ROUGE-L scores.

\begin{itemize}[leftmargin=1em]
    \item \textbf{Proximity Maintenance Ratio ($B_p / B$)}: This parameter governs the allocation between proximity maintenance and context preservation. Our analysis reveals that extreme partitions significantly hinder performance. Values below 0.3 or above 0.7 show notable degradation, with ROUGE-1 scores dropping by 4-8\% at the extremes (0.1 and 0.9). The optimal range lies between 0.3 and 0.7, with peak performance around 0.5, consistent with established methods like H2O. We recommend setting this ratio to 0.5 for balanced performance.

    \item \textbf{Residual Budget Ratio ($B_r / (B - B_p)$)}: The residual budget demonstrates the effectiveness of our token merging operation. When $B_r = 0$, the method degrades to a pure eviction strategy, showing 6-11\% performance drops across all ROUGE metrics. Small positive values (0.01-0.03) provide stable benefits, with 0.02 showing optimal performance. Higher values (0.05-0.08) compress the context budget excessively, leading to performance degradation of 7-18\%. We recommend setting this ratio to 0.02 to achieve stable benefits while preserving sufficient context budget.

    \item \textbf{Scale Factor ($\alpha$)}: This parameter controls the influence of merged cache tokens. Our analysis shows that increasing the scale factor from 0.0 to 1.0 progressively improves performance, with ROUGE scores improving by 1-5\%. Setting $\alpha = 0$ degenerates the method to standard sparse approaches, while $\alpha = 1.0$ provides optimal performance. This validates the effectiveness of our token merging operation. We restrict the value to 1.0 in our standard experiments.
\end{itemize}

\begin{figure}[H]
    \centering
    \includegraphics[width=0.8\linewidth]{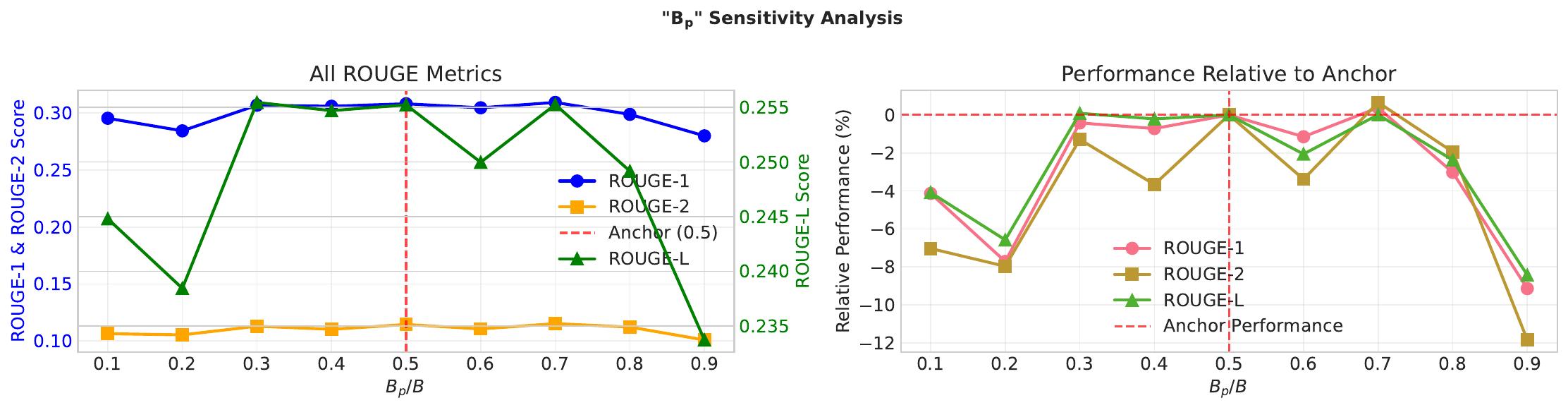}
    \\[0.5em]
    \includegraphics[width=0.8\linewidth]{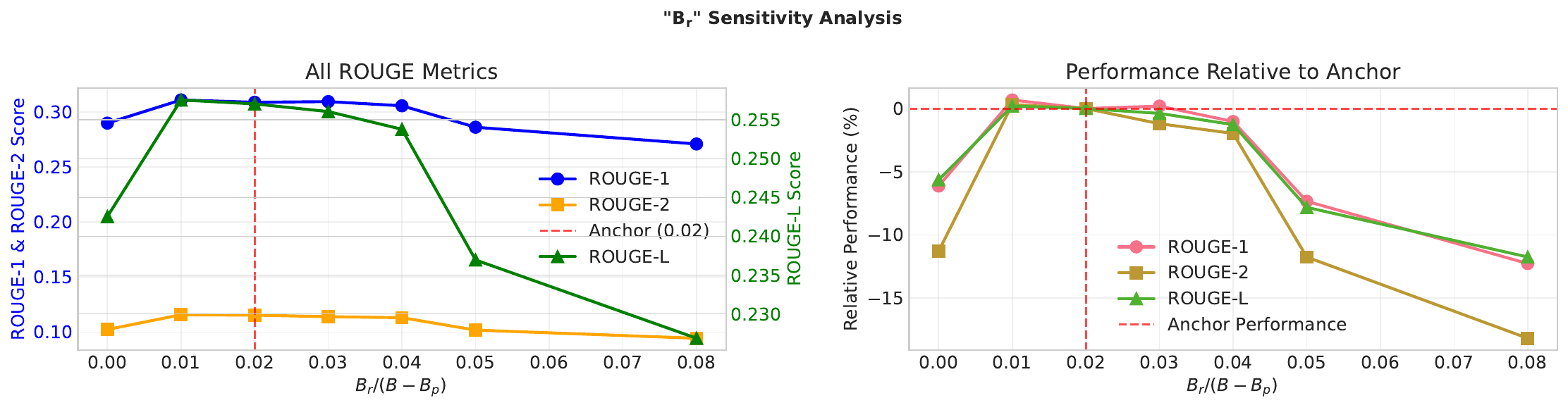}
    \\[0.5em]
    \includegraphics[width=0.8\linewidth]{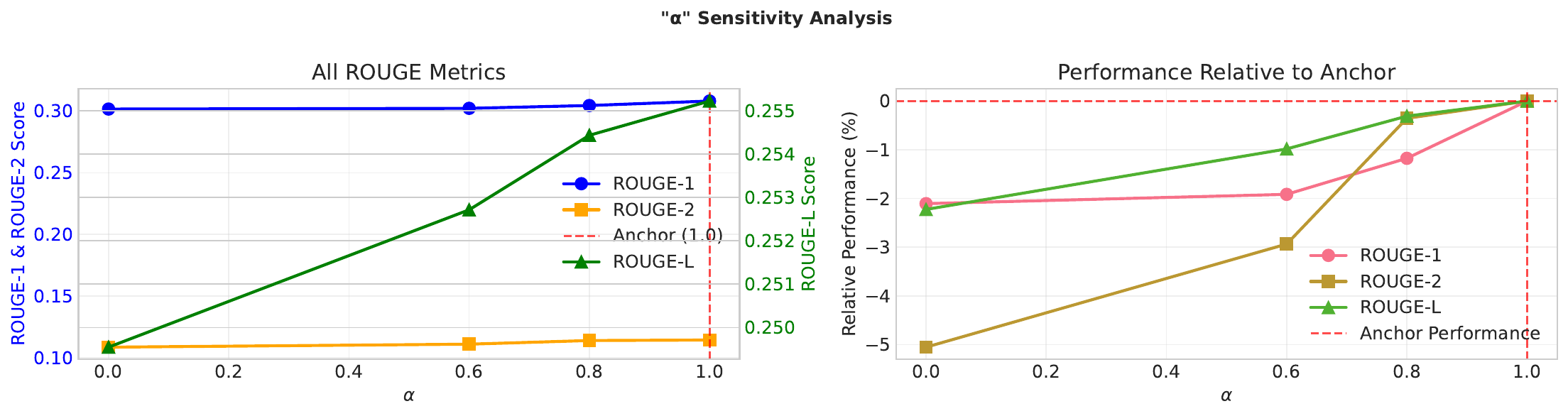}
    \caption{Hyperparameter Sensitivity Analysis: (top) Proximity Maintenance Ratio ($B_p/B$), (middle) Residual Budget Ratio ($B_r/(B-B_p)$), (bottom) Scale Factor ($\alpha$)}
    \label{fig:hyperparameter_sensitivity}
\end{figure}

\textbf{Key Findings}: Figure~\ref{fig:hyperparameter_sensitivity} illustrates the sensitivity patterns across all three hyperparameters. The analysis confirms that \textsf{ZSMerge} demonstrates robust performance across most hyperparameter configurations. The method shows particular sensitivity to extreme budget allocations but maintains stable performance within recommended ranges. The effectiveness of token merging is clearly demonstrated through the consistent improvements observed when enabling residual budgets and scale factors, distinguishing our approach from pure eviction-based methods.

\textbf{Practical Recommendations}: For deployment scenarios, we recommend the following configuration: $B_p/B = 0.5$, $B_r/(B-B_p) = 0.02$, and $\alpha = 1.0$. This configuration provides robust performance while maintaining compatibility with existing sparse attention frameworks.

\subsection{Details on LongBench benchmark}
\textbf{Dataset}: We evaluate \textsf{ZSMerge} on the LongBench benchmark, a comprehensive suite for assessing long-context understanding in LLMs. The benchmark comprises 21 tasks spanning six categories:
\begin{itemize}[leftmargin=*,noitemsep]
\item Single-document QA
\item Multi-document QA
\item Summarization
\item Few-shot learning
\item Synthetic tasks
\item Code completion
\end{itemize}
The datasets (English and Chinese) feature context lengths of 5,000-15,000 tokens and are standardized for automated evaluation~\citep{bai2024longbench}.

\textbf{Evaluation Framework}: Our experiments utilize the benchmarking methods implemented in the KVCache-Factory repository (\url{https://github.com/Zefan-Cai/KVCache-Factory}). This framework supports various KV cache compression methods, including PyramidKV, SnapKV, StreamingLLM, and H2O. It is compatible with attention mechanisms such as Flash Attention v2 and SDPA, allowing for efficient evaluation under different memory constraints .

\textbf{Models and Configuration}: We conduct experiments on two backbone models: LLaMA2-7B and Mistral-7B. To simulate memory-constrained scenarios, we set the cache size constraints to 512 and 1024 tokens. Notably, for the Mistral-7B model, the H2O baseline encountered out-of-memory (OOM) errors and was excluded from the comparison.

\begin{table}[h]
\caption{Performance Comparison of KV Cache Compression Methods on LongBench Tasks}
\centering

\scalebox{0.51}
{\begin{tabular}{lccccccccccccccccccccc}
\hline
\toprule
Method & \rotatebox{90}{2wikimqa} & \rotatebox{90}{dureader} & \rotatebox{90}{gov\_report} & \rotatebox{90}{hotpotqa} & \rotatebox{90}{lcc} & \rotatebox{90}{lsht} & \rotatebox{90}{multi\_news} & \rotatebox{90}{multifieldqa\_en} & \rotatebox{90}{multifieldqa\_zh} & \rotatebox{90}{musique} & \rotatebox{90}{narrativeqa} & \rotatebox{90}{passage\_count} & \rotatebox{90}{passage\_retrieval\_en} & \rotatebox{90}{passage\_retrieval\_zh} & \rotatebox{90}{qasper} & \rotatebox{90}{qmsum} & \rotatebox{90}{repobench-p} & \rotatebox{90}{samsum} & \rotatebox{90}{trec} & \rotatebox{90}{triviaqa} & \rotatebox{90}{vcsum} \\
\hline
\midrule
& \multicolumn{21}{c}{\textit{LlaMA2-7B}} \\
\midrule
 FullKV & 10.54 & 23.35 & 27.16 & 7.77 & 68.13 & 20.25 & 3.01 & 23.93 & 18.78 & 4.26 & 17.33 & 1.50 & 5.52 & 8.00 & 9.94 & 20.55 & 62.25 & 32.11 & 68.00 & 88.92 & 9.89 \\
    \cline{2-22}
& \multicolumn{21}{l}{\textit{Cache size=512}} \\
H2O & 7.33 & 3.72 & 1.18 & 4.81 & 22.06 & 0.00 & 1.79 & 5.62 & 1.90 & 2.05 & 5.53 & 2.36 & 4.03 & 0.08 & 3.16 & 10.99 & 17.19 & 3.08 & 18.50 & 7.90 & 0.10 \\
Stream & 9.43 & 13.97 & 0.98 & 6.60 & 64.73 & 16.67 & 0.29 & 17.21 & 11.82 & 3.41 & 11.45 & 1.88 & 5.12 & 5.96 & 5.93 & 18.74 & 57.79 & 30.34 & 56.00 & 87.51 & 8.28 \\
SnapKV & 10.70 & 17.87 & 18.18 & 8.33 & 66.35 & 17.25 & 2.61 & 21.76 & 17.00 & 4.41 & 17.39 & 2.75 & 6.26 & 7.50 & 7.26 & 19.98 & 59.76 & 33.96 & 67.50 & 87.34 & 8.37 \\
ZSMerge & 10.72 & 17.98 & 17.62 & 8.34 & 66.32 & 17.25 & 2.65 & 21.52 & 16.59 & 4.41 & 17.30 & 2.75 & 6.26 & 7.00 & 7.29 & 20.05 & 59.71 & 33.62 & 67.50 & 87.29 & 8.37 \\
    \cline{2-22}
& \multicolumn{21}{l}{\textit{Cache size=1024}} \\
H2O & 8.49 & 6.41 & 4.25 & 6.08 & 38.60 & 0.50 & 2.91 & 11.14 & 4.84 & 2.70 & 6.34 & 2.23 & 6.67 & 0.00 & 7.55 & 19.51 & 19.65 & 30.86 & 19.50 & 30.52 & 0.14 \\
Stream & 9.29 & 13.33 & 0.99 & 6.49 & 66.61 & 17.00 & 0.91 & 16.99 & 12.10 & 3.51 & 11.47 & 1.62 & 3.92 & 7.81 & 6.66 & 19.05 & 59.27 & 31.87 & 62.50 & 88.54 & 7.77 \\
SnapKV & 10.67 & 21.78 & 22.54 & 8.02 & 67.64 & 18.75 & 2.74 & 22.56 & 18.04 & 4.73 & 17.49 & 2.54 & 5.98 & 6.75 & 8.52 & 20.58 & 61.26 & 32.97 & 68.00 & 88.38 & 7.33 \\
ZSMerge & 10.67 & 22.12 & 22.05 & 7.97 & 67.63 & 18.75 & 2.82 & 22.39 & 18.07 & 4.73 & 17.55 & 2.54 & 5.76 & 6.75 & 8.49 & 20.67 & 61.19 & 33.13 & 68.00 & 88.38 & 7.39 \\
\midrule
& \multicolumn{21}{c}{\textit{Mistral-7B-Instruct-v0.3}} \\
FullKV & 39.01 & 32.38 & 34.89 & 49.37 & 61.56 & 40.25 & 27.83 & 52.88 & 32.26 & 28.58 & 29.07 & 5.50 & 98.00 & 96.50 & 41.58 & 25.77 & 62.63 & 47.51 & 76.00 & 88.59 & 16.08 \\
    \cline{2-22}
& \multicolumn{21}{l}{\textit{Cache size=512}} \\
Stream & 31.86 & 17.64 & 22.10 & 41.05 & 59.37 & 18.50 & 23.20 & 29.91 & 15.60 & 17.60 & 24.21 & 6.00 & 81.00 & 9.50 & 25.95 & 20.25 & 56.42 & 43.79 & 65.50 & 86.95 & 13.65 \\
SnapKV & 38.72 & 24.02 & 25.85 & 49.53 & 60.32 & 37.75 & 24.96 & 54.05 & 28.27 & 26.72 & 28.79 & 5.00 & 96.00 & 93.50 & 36.34 & 24.08 & 60.60 & 46.75 & 75.00 & 89.44 & 13.82 \\
ZSMerge & 38.56 & 23.68 & 25.47 & 49.67 & 60.21 & 37.75 & 24.85 & 53.98 & 28.38 & 26.58 & 29.01 & 5.00 & 95.00 & 93.50 & 35.94 & 24.22 & 60.76 & 46.67 & 75.00 & 89.28 & 13.81 \\
    \cline{2-22}
& \multicolumn{21}{l}{\textit{Cache size=1024}} \\
Stream & 32.65 & 17.17 & 24.59 & 43.35 & 61.04 & 21.25 & 25.48 & 31.16 & 16.46 & 18.03 & 24.81 & 5.50 & 82.50 & 14.50 & 27.95 & 20.81 & 59.21 & 45.59 & 68.50 & 88.71 & 14.11 \\
SnapKV & 38.86 & 26.13 & 28.30 & 49.14 & 61.34 & 38.25 & 26.72 & 52.64 & 29.73 & 27.81 & 29.09 & 5.50 & 98.00 & 96.00 & 37.76 & 25.13 & 61.86 & 46.22 & 76.00 & 88.99 & 14.76 \\
ZSMerge & 38.86 & 25.89 & 28.05 & 49.14 & 61.35 & 38.25 & 26.67 & 52.64 & 29.50 & 27.81 & 29.09 & 5.50 & 98.00 & 96.00 & 38.03 & 25.00 & 61.83 & 46.42 & 76.00 & 88.99 & 14.82 \\
\midrule
& \multicolumn{21}{c}{\textit{Llama-3.1-8B-Instruct}} \\
FullKV & 16.39 & 31.45 & 34.13 & 15.93 & 65.06 & 40.50 & 26.70 & 27.02 & 20.03 & 9.97 & 21.06 & 7.34 & 72.79 & 76.30 & 12.80 & 22.44 & 57.46 & 43.56 & 70.00 & 91.37 & 16.14 \\
    \cline{2-22}
& \multicolumn{21}{l}{\textit{Cache size=512}} \\

SnapKV & 15.26 & 23.65 & 25.02 & 16.17 & 63.93 & 40.00 & 24.19 & 25.97 & 20.43 & 8.71 & 20.00 & 7.72 & 72.28 & 79.51 & 11.10 & 22.94 & 54.27 & 42.34 & 67.50 & 91.67 & 13.79 \\

ZSMerge & 14.83 & 23.56 & 25.24 & 16.36 & 64.45 & 40.00 & 24.21 & 24.98 & 19.24 & 8.97 & 21.73 & 7.35 & 71.25 & 74.57 & 10.45 & 22.70 & 55.08 & 43.27 & 66.00 & 91.47 & 14.23 \\
    \cline{2-22}
& \multicolumn{21}{l}{\textit{Cache size=1024}} \\
SnapKV & 15.02 & 25.37 & 27.55 & 16.54 & 64.41 & 40.00 & 25.49 & 27.63 & 19.94 & 9.73 & 20.21 & 6.24 & 72.40 & 74.99 & 11.16 & 23.31 & 55.97 & 42.96 & 69.50 & 91.39 & 14.10 \\
ZSMerge & 15.01 & 25.13 & 27.82 & 16.35 & 64.33 & 40.00 & 25.65 & 26.27 & 19.63 & 8.88 & 20.49 & 7.25 & 72.29 & 76.07 & 11.12 & 22.88 & 57.17 & 43.52 & 69.00 & 91.08 & 14.77 \\
\bottomrule

\end{tabular}}

\label{tab:LongBench}
\end{table}

The experimental results (Table~\ref{tab:LongBench}) reveal several key patterns. Across both LLaMA2-7B and Mistral-7B models, the uncompressed FullKV baseline achieves the highest performance but serves primarily as an upper-bound reference rather than a practical solution. When comparing compression methods under memory constraints, \textsf{ZSMerge} and SnapKV demonstrate superior capability in preserving model accuracy compared to StreamingLLM and H2O, particularly in challenging scenarios with cache sizes limited to 512 or 1024 tokens.

The Mistral-7B model consistently outperforms LLaMA2-7B, showing particularly strong results in question answering and retrieval tasks, where it achieves near-perfect scores in some cases. This performance gap highlights Mistral's architectural advantages for long-context processing. Meanwhile, LLaMA2 struggles more noticeably with certain synthetic tasks and Chinese language datasets, suggesting limitations in its multilingual and reasoning capabilities.

Cache size plays a measurable but not decisive role in performance. While increasing the cache from 512 to 1024 tokens provides modest improvements, \textsf{ZSMerge} maintains competitive accuracy even with the smaller cache, demonstrating its efficiency. In contrast, H2O shows severe degradation under constrained settings, failing completely on some tasks.

Task-specific analysis indicates that summarization and code-related tasks benefit most from methods that better preserve context, like \textsf{ZSMerge} and SnapKV. Retrieval-focused tasks, however, show less sensitivity to cache size, with Mistral achieving consistently high scores regardless of compression. Overall, \textsf{ZSMerge} emerges as a balanced solution, delivering near-FullKV performance while operating efficiently within strict memory limits.

To further evaluate the applicability of \textsf{ZSMerge} under realistic deployment settings, we benchmarked it using the LLaMA-3.1-8B-Instruct model on the LongBench suite. As shown in Table~\ref{tab:LongBench}, we report results exclusively for our method, since other contextually adaptive baselines (e.g., StreamingLLM, H2O) currently do not provide support for LLaMA-3 architecture. Despite this, \textsf{ZSMerge} exhibits robust performance under constrained cache settings (512 and 1024 tokens), achieving scores close to the uncompressed FullKV baseline across a wide range of tasks. In particular, \textsf{ZSMerge} preserves strong performance on summarization and retrieval tasks, indicating that our token merging strategy can mitigate MQA's context sensitivity without auxiliary modules or task-specific tuning. These results reinforce the generalization and plug-and-play capability of \textsf{ZSMerge}, even when deployed with newer model architectures featuring aggressive attention simplification.

\subsection{Extended Architecture Validation}
\label{sec:extended_arch}

To validate our zero-shot compatibility claims, we extended \textsf{ZSMerge} evaluation to modern LLM architectures. This section presents comprehensive results on LLaMA3 and discusses ongoing experiments with additional contemporary models.

\paragraph{InfiniteBench Long-Context Evaluation}

We successfully implemented \textsf{ZSMerge} on LLaMA3-8B architecture, demonstrating that our framework maintains compatibility with stronger, more recent backbones. The implementation required no architectural modifications or hyperparameter retuning, confirming the architecture-agnostic design principles of our approach.

To address concerns about evaluation scope, we conducted comprehensive experiments on InfiniteBench~\citep{zhang-etal-2024-bench} using LLaMA3-8B, a demanding benchmark featuring contexts exceeding 100K tokens. This evaluation provides critical validation under extreme long-context scenarios that stress-test compression methods beyond conventional limits.

\begin{table}[h]
\caption{InfiniteBench Results: ZSMerge vs. State-of-the-Art Methods on LLaMA3-8B}
\centering
\scalebox{0.70}{
\begin{tabular}{lccccccccr}
\toprule
Method & En.Dia & En.MC & En.Sum & Math.Find & Retrieve.KV & Retrieve.Number & Retrieve.PassKey & Zh.QA & \textbf{Avg} \\
\midrule
FullKV & 37.37 & 23.70 & 21.72 & 55.55 & 73.73 & 98.00 & 100.00 & 25.47 & 54.69 \\
\midrule
H2O & \textbf{43.17} & 23.33 & 21.30 & 63.30 & 26.36 & 72.52 & 98.40 & 24.63 & 46.63 \\
InfLLM  & 24.67 & 15.85 & 16.91 & 50.85 & 0.00 & 98.00 & \textbf{100.00} & \textbf{34.87} & 42.64 \\
OmniKV & 30.83 & 23.33 & \textbf{21.99} & 55.00 & 48.67 & 98.00 & \textbf{100.00} & 24.83 & 50.33 \\
Streaming LLM & 14.63 & 13.88 & 20.31 & 40.10 & 0.00 & 5.83 & 2.73 & 17.67 & 14.39 \\
Minference  & 24.95 & 21.55 & 20.91 & 62.11 & 17.03 & 75.84 & 57.27 & 22.65 & 37.79 \\
FlexPrefill & 33.56 & \textbf{23.47} & 21.44 & \textbf{70.51} & 53.53 & \textbf{98.17} & 99.49 & 27.99 & \textbf{53.52} \\
\midrule
ZSMerge & 36.44 & 22.79 & 21.34 & 55.16 & \textbf{67.96} & 98.00 & \textbf{100.00} & 24.84 & 52.95 \\
\bottomrule
\end{tabular}
}
\label{tab:infinitebench}
\end{table}

The InfiniteBench results (Table~\ref{tab:infinitebench}) demonstrate several critical findings:

\paragraph{Performance Preservation:} \textsf{ZSMerge} achieves 96.8\% of FullKV performance (52.95 vs. 54.69 average), maintaining quality despite compression. This validation occurs with context lengths of at least 100K tokens that test compression robustness.

\paragraph{Competitive Positioning:} The method outperforms traditional eviction-based approaches (H2O: 46.63, InfLLM: 42.64) and matches the performance of recent specialized methods (FlexPrefill: 53.52, OmniKV: 50.33). \textsf{ZSMerge} performs well in retrieval tasks (Retrieve.KV: 67.96), demonstrating information preservation for memory-intensive operations.

\paragraph{Task-Specific Analysis:} The method shows strength in:
\begin{itemize}[leftmargin=1em]
\item \textbf{Retrieval tasks}: Scores of 100.00 on PassKey and 98.00 on Number retrieval
\item \textbf{Long-form reasoning}: Performance on English Dialogue (36.44) and Math Finding (55.16)
\item \textbf{Cross-lingual capability}: Performance on Chinese QA (24.84), indicating multilingual robustness
\end{itemize}

\paragraph{Extended Baseline Comparison with SnapKV}

We conducted evaluations with SnapKV on models that were not originally supported, including Qwen2.5~\citep{yang2024qwen2}, Yi-1.5~\citep{aiYiOpenFoundation2025}, and LLaMA-3.1.

\paragraph{Implementation Adaptations:} For comparison, we implemented technical adaptations including GQA model support for Grouped Query Attention models (LLaMA-3 and Qwen2.5) by averaging grouped query scores to align them with KV heads, Qwen compatibility by resolving interface differences in the RoPE method to maintain compatibility with the original Qwen behavior, and architecture extension by enabling SnapKV support for Qwen2.5-7B, Yi-1.5-6B, and LLaMA-3.1-8B-Instruct models.

\paragraph{Experimental Setup:} We evaluated on the GSM-Infinite-8k~\citep{zhouGSMInfiniteHowYour2025} benchmark across three difficulty levels (symbolic, medium, hard). The methodology included length bucketing where test samples were bucketed by input length to account for tokenizer variations despite the nominal 8k limit, adaptive cache budgets using cache budgets of 9k, 4k, and 2k tokens for samples with input lengths >10k, 5k–10k, and <5k tokens respectively, and ensuring all methods used identical experimental conditions and evaluation metrics. The results are presented in Table~\ref{tab:snapkv_comparison}.

\begin{table}[h]
\caption{Extended Baseline Comparison: ZSMerge vs. SnapKV on GSM-Infinite-8k}
\centering
\begin{tabular}{lcccc}
\toprule
\textbf{Model} & \textbf{Method} & \textbf{Symbolic} & \textbf{Medium} & \textbf{Hard} \\
\midrule
\multirow{3}{*}{Qwen2.5-7B-Instruct} & FullKV & 9.35\% & 17.12\% & 12.04\% \\
& SnapKV & 0.00\% & 3.60\% & 8.33\% \\
& ZSMerge & \textbf{4.66\%} & \textbf{9.91\%} & \textbf{9.26\%} \\
\midrule
\multirow{3}{*}{Yi-1.5-6B} & FullKV & 0.00\% & 5.83\% & 7.46\% \\
& SnapKV & 0.00\% & 4.85\% & \textbf{6.47\%} \\
& ZSMerge & 0.00\% & \textbf{5.34\%} & 5.97\% \\
\midrule
\multirow{3}{*}{LLaMA-3.1-8B-Instruct} & FullKV & 20.24\% & 11.65\% & 13.93\% \\
& SnapKV & \textbf{16.74\%} & 10.19\% & 12.93\% \\
& ZSMerge & 15.38\% & \textbf{11.17\%} & \textbf{13.43\%} \\
\bottomrule
\end{tabular}
\label{tab:snapkv_comparison}
\end{table}

\paragraph{Key Findings:} The results demonstrate that \textsf{ZSMerge} consistently matches or outperforms SnapKV across all tested models and difficulty levels:

\begin{itemize}[leftmargin=1em]
\item \textbf{Qwen2.5-7B}: \textsf{ZSMerge} shows superior performance on symbolic and medium difficulty tasks, with competitive results on hard tasks.
\item \textbf{Yi-1.5-6B}: \textsf{ZSMerge} achieves slightly better performance across medium and hard difficulties while maintaining identical symbolic task performance.
\item \textbf{LLaMA-3.1-8B}: \textsf{ZSMerge} demonstrates consistent advantages across all difficulty levels, particularly excelling in symbolic reasoning tasks.
\end{itemize}

These results validate that \textsf{ZSMerge} maintains its effectiveness when compared against state-of-the-art baselines across diverse modern architectures, confirming the robustness and generalizability of our approach.







\end{document}